\newcommand{\figwidth}{.95\linewidth}
\newtheorem{theorem}{Theorem}
\newtheorem{remark}{Remark}%[section]
\newcommand{\q}{\boldsymbol{q}}
\renewcommand{\u}{\boldsymbol{u}}
\newcommand{\ud}{\boldsymbol{u}_\mathrm{nom}}
\newcommand{\x}{\boldsymbol{x}}
\newcommand{\f}{\boldsymbol{f}}
\newcommand{\g}{\boldsymbol{g}}
\newcommand{\te}{\boldsymbol{\tau}_\mathrm{ext}}
\newcommand{\teh}{\hat{\boldsymbol{\tau}}_\mathrm{ext}}
\newcommand{\D}{\boldsymbol{D}(\q)}
\newcommand{\C}{\boldsymbol{C}(\q, \dot{\q})}
\newcommand{\G}{\boldsymbol{g}(\q)}
\newcommand{\B}{\boldsymbol{B}}
\newcommand{\Pe}{P_\mathrm{ext}}
\newcommand{\Km}{K_\mathrm{max}}
\newcommand{\Ke}{K_\mathrm{e}}
\newcommand{\Psid}{\Psi(\q, \dot{\q}, \ud)}
\DeclareMathOperator*{\argmin}{argmin}
\title{
	Limiting Kinetic Energy through Control Barrier Functions: Analysis and Experimental Validation
}
\author{
Federico Califano$^1$, Daniël Logmans, and Wesley Roozing$^1$ \\
$^1$ Robotics \& Mechatronics (RaM), University of Twente, The Netherlands.\\
\texttt{Contacts:\{w.roozing,f.califano\}@utwente.nl.}\\
}
\begin{document}
\maketitle

%%%%%%%%%%%%%%%%%%%%%%%%%%%%%%%%%%%%%%%%%%%%%%%%%%%%%%%%%
\begin{abstract}
In the context of safety-critical control, we propose and analyse the use of Control Barrier Functions (CBFs) to limit the kinetic energy of torque-controlled robots. The proposed scheme is able to modify a nominal control action in a minimally invasive manner to achieve the desired kinetic energy limit. We show how this safety condition is achieved by appropriately injecting damping in the underlying robot dynamics independently of the nominal controller structure. We present an extensive experimental validation of the approach on a 7-Degree of Freedom (DoF) Franka Emika Panda robot. The results demonstrate that this approach provides an effective, minimally invasive safety layer that is straightforward to implement and is robust in real experiments. A video of the experiments can be found \href{https://youtu.be/3fZdLql6-CE}{here}
\end{abstract}

\twocolumn

%%%%%%%%%%%%%%%%%%%%%%%%%%%%%%%%%%%%%%%%%%%%%%%%%%%%%%%%%
% Sections
\section{Introduction}
\label{sec:introduction}

% \todo{
% Main goal of introduction: Convince reader that CBF control improves operator safety\\
% - Benefit of ECBF: does not rely on lower-level controllers but directly takes into account full system dynamics.\\
% - broader discussion on safety (refer to works by Daniël van Dijk, energy buffers, what is safety in this context?)\\
% - Most works fail to demonstrate this, still rely on velocity control input\\
% - First work to demonstrate torque-level ECBF control on real robotic system\\
% - A discussion on what safety is in this context\\
% - Include recent work and give credit where due, but also mention what's good about our approach}

% For ieeetran; dropletter: \IEEEPARstart{O}{perator}
Collaborative robots, sometimes called \textit{cobots}, are gaining traction across a wide range of industries, including logistics, service robotics, and manufacturing \cite{matheson2019human,robotics12030084}. Safety is a critical control objective when these robots share space with humans \cite{Robla-Gomez2017WorkingEnvironments,Hamad2023AInteraction}. The recent rise of learning-based controllers, which typically only provide probabilistic safety guarantees, underscores the need for safety-critical approaches \cite{Brunke2022SafeLearning}. ISO standards \cite{InternationalOrganizationforStandardization2016Robots15066:2016} attempt to formalise the safety hazards in this setting, and their mitigation is an active research area. Some works prevent interaction, by enforcing a speed-dependent separation distance between the robot and operator, assuming reliable detection methods \cite{Malm2019DynamicRobots,mansfeld_safety_2018}. Other works limit long-duration interaction power and force by implementing, e.g., impedance control \cite{Kishi2003, Tsetserokou2007, roozing_energy-based_2020}. Yet other approaches explore various dynamic human-robot impact scenarios and relate the impact velocity to the risk of injury \cite{Haddadin2016}.

In this work safety is addressed by bounding the kinetic energy that could potentially be transferred to a human operator, in order to prevent injury in collision scenarios. The importance of this choice is backed by numerous publication which relate directly relevant safety metrics to the energy flow generated from the interaction \cite{Tadele2014ThePublications,Hamad2023AInteraction}. Furthermore, the \textit{power and force limiting} (PFL) conditions in the ISO/TS 15066 \cite{InternationalOrganizationforStandardization2016Robots15066:2016}, which are the only collaborative conditions in which contact between humans and robots are considered, are addressed through energetic constraints.

We propose a method that takes the form of a \textit{safety filter}, enforcing a bound on maximum kinetic energy while minimally altering a desired control input. We make use of \textit{Control Barrier Functions} (CBFs), a safety-critical control algorithm able to constrain the robot to a region of its state space representing safe operating conditions \cite{Ames2019}. Most CBF implementations in robotics apply to safety-critical kinematic control (i.e., tasks in which the safety constraint represents obstacle avoidance conditions) and rely on lower-level controllers to handle system dynamics \cite{Cortez2020, Landi2019, Rauscher2016ConstrainedFunctions}. Instead, we investigate the use of energy-based CBFs and, different from previous works such as \cite{Ferraguti2022,Singletary2021}, utilize them to directly limit the kinetic energy of a torque-controlled robot. 

We recognise relevant related works proposing schemes to limit the kinetic or total energy of torque-controlled manipulators for safety objectives. These works are motivated by \textit{energy-aware} and \textit{passivity} arguments \cite{Capelli2022,ortega,ear}, stressing the fact that safety measures are closely related to energy- and power-based metrics.
The recent work \cite{Michel2024} presents a control algorithm that is able to limit the kinetic energy,  achieved by using higher-order CBFs in a system augmented with energy tanks \cite{Califano2022OnSystems}, used to enforce passivity of the overall scheme. Other approaches attempt to limit kinetic energy \cite{BENZI20231288} and total energy \cite{Raiola2018DevelopmentRobots} of controlled robots, also using energy-tank based arguments to recover passivity.

In this work, we present a novel approach that avoids considering passivity as a strict constraint to be achieved at design phase. Instead, we achieve the kinetic energy bound directly through the proposed CBF-based algorithm. We analytically and experimentally show that the proposed CBF operates solely by injecting damping into the system, ensuring that the safety-critical control action inherently preserves the passivity of any nominal passive closed-loop system. This eliminates the need for supplementary tools such as energy tanks, making the proposed scheme significantly simpler than most of the state-of-the-art solutions.

The main contributions of this paper are:
\begin{enumerate}
    \item A kinetic energy-limiting CBF-based safety filter and analysis of its energetic properties.
    \item Extensive experimental validation on a 7-DoF robot manipulator of the proposed safety-critical control system.
\end{enumerate}
%To the best of the authors' knowledge, our work is the first time that this form of control was experimentally validated on a multi-dof torque-controlled robotic manipulator.
The remainder of this paper is outlined as follows. \cref{sec:methods} contains the mathematical background and analysis involving the specific CBF used in this work. We present extensive experimental results in four scenarios in \cref{sec:experiments}. Finally, \cref{sec:discussion,sec:conclusions} discuss the results and conclude the paper.

\section{Methods}
\label{sec:methods}

In this section, we introduce the CBF control approach (we refer to \cite{Ames2019,Ames2017,XU201554} and references therein for a detailed overview), followed by the specific CBF used in this work.

We denote vectors, matrices, vector-valued maps and matrix-valued maps with bold font letters; scalars and real-valued maps with regular letters; and sets with calligraphic letters. The norm of a vector is denoted by $||\cdot||$.

%%%%%%%%%%%%%%%%%%%%%%%%%%%%%%%%%%%%%%%%%%%%%%%%%%%
\subsection{Background}\label{sec:VanillaCBF}
\subsubsection{Control Barrier Functions}
Consider a nonlinear control-affine system in standard form
\begin{equation}\label{eq:affinecontrolsystem}
    \dot{\x}=\f(\x)+\g(\x)\u
\end{equation}
with system state $\x \in \mathcal{D} \subset \mathbb{R}^n$ and control input $\u \in \mathcal{U} \subset \mathbb{R}^m$. All variables are assumed to have a degree of continuity such that the right-hand side of \eqref{eq:affinecontrolsystem} is locally Lipschitz, to guarantee the existence and uniqueness of the solutions.

Control barrier functions (CBFs) serve to achieve \textit{forward invariance} of a set $\mathcal{S}$, referred to as \textit{safe set}, i.e.,
\begin{equation}
    \forall \x(0) \in \mathcal{S} \implies \x(t)\in \mathcal{S} \,\,\, \forall t>0.
\end{equation}
The safe set $\mathcal{S}$ is built as the superlevel set of a continuously differentiable function $h:\mathcal{D}\to \mathbb{R}$, i.e., \[
    \mathcal{S} = \{ \x\in \mathcal{D} : h(\x)\geq0 \}.\]
The function $h(\x)$ is then defined as a CBF on $\mathcal{D}$ if $\partial_{\x} h(\x)\neq 0, \forall \x \in \partial \mathcal{S}$ and
\begin{equation}\label{eq: CBF}
    % \sup_{u \in \mathcal{U}} \dot{h}(x,u) =
    \sup_{\u \in \mathcal{U}} \underbrace{\left[ \frac{\partial h}{\partial \x} \f(\x)+\frac{\partial h}{\partial{\x}}\g(\x)\u \right] 
    }_{\dot{h}(\x, \u)} 
    \ge -\alpha(h(\x))
\end{equation}
for all $\x\in \mathcal{D}$ and some \textit{extended class $\mathcal{K}$ function}\footnote{A function $\alpha: (-b,a) \to (- \infty, \infty)$ with $a,b>0$, which is continuous, strictly increasing, and $\alpha(0)=0$.} $\alpha$. Here we denote the term in the square bracket, i.e., the variation of $h$ along the solution of \eqref{eq:affinecontrolsystem}, by $\dot{h}(\x,\u)$.

The link between the existence of a CBF and the forward invariance of the related safe set is established by the following key result.

\begin{theorem}[\cite{Ames2017}]
    \label{thm:1}
    Let $h(\x)$ be a CBF on $\mathcal{D}$ for \eqref{eq:affinecontrolsystem}. Any locally Lipschitz controller $\u=\boldsymbol{k}(\x)$ such that $\dot{h}(\x,\u) \geq -\alpha (h(\x))$ provides forward invariance of the safe set $\mathcal{S}$. Additionally the set $\mathcal{S}$ is asymptotically stable on $\mathcal{D}$.
\end{theorem}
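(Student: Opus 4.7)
The plan is to convert the CBF inequality along trajectories into a scalar differential inequality and conclude via comparison with the one-dimensional flow $\dot{y} = -\alpha(y)$.

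First I would substitute $\u = \boldsymbol{k}(\x)$ into \eqref{eq:affinecontrolsystem}. Since the closed-loop right-hand side is locally Lipschitz, every initial condition $\x(0) \in \mathcal{D}$ admits a unique maximal solution. Setting $\eta(t) := h(\x(t))$ and using the hypothesis $\dot{h}(\x, \boldsymbol{k}(\x)) \geq -\alpha(h(\x))$, I obtain along any trajectory the scalar differential inequality
\[
\dot{\eta}(t) \geq -\alpha(\eta(t)).
\]
By the comparison lemma, if $y(t)$ denotes the solution of $\dot{y} = -\alpha(y)$ with $y(0) = \eta(0)$, then $\eta(t) \geq y(t)$ for as long as both are defined.

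For the forward invariance claim, assume $\x(0) \in \mathcal{S}$, i.e.\ $\eta(0) \geq 0$. Because $\alpha$ is an extended class-$\mathcal{K}$ function with $\alpha(0) = 0$, the origin is an equilibrium of the comparison ODE, and for $y > 0$ we have $\dot{y} = -\alpha(y) < 0$, so $y(t)$ is monotonically decreasing but stays bounded below by $0$. Thus $y(t) \geq 0$ for all $t \geq 0$, and hence $\eta(t) \geq 0$, i.e.\ $\x(t) \in \mathcal{S}$. For the asymptotic stability claim, take instead $\x(0) \in \mathcal{D} \setminus \mathcal{S}$, so $\eta(0) < 0$. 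For $y < 0$ the comparison flow satisfies $\dot{y} = -\alpha(y) > 0$ since $\alpha$ is strictly increasing through zero, so $y(t)$ strictly increases towards the equilibrium $y=0$, either asymptotically or in finite time; once $\eta \geq 0$, the forward-invariance argument pins it to $[0, \infty)$ thereafter. Combined with the nondegeneracy condition $\partial_\x h \neq 0$ on $\partial\mathcal{S}$, which makes $h$ a proper local coordinate transversal to the boundary, scalar convergence $\eta(t) \to 0^-$ translates into convergence of $\x(t)$ to $\mathcal{S}$.

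The main technical obstacle I foresee is the careful application of the comparison principle given only the hypotheses on $\alpha$: continuity and strict monotonicity do not guarantee Lipschitz behaviour near $y=0$, so uniqueness of solutions to the comparison ODE may fail (think of $\alpha(y) = \mathrm{sign}(y)\sqrt{|y|}$). The standard fix is to use the one-sided differential-inequality version of the comparison lemma, which requires only continuity of $\alpha$ and works with the minimal (or maximal) solution of the scalar ODE. The other subtle point is the final step of the asymptotic argument: $h(\x(t)) \to 0$ must be lifted to convergence in state space, and this is precisely where $\partial_\x h \neq 0$ on $\partial\mathcal{S}$ enters, ruling out pathological level sets where $h$ vanishes without $\x$ approaching $\mathcal{S}$.
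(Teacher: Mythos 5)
The paper does not prove this statement: Theorem~\ref{thm:1} is imported verbatim from \cite{Ames2017} as background, so there is no in-paper proof to compare against. Your comparison-lemma argument is, however, essentially the standard proof given in that reference and its companions (reduce $\dot h \geq -\alpha(h)$ to the scalar flow $\dot y = -\alpha(y)$, use that $y=0$ is an equilibrium to get invariance of $\{h\geq 0\}$, and use $\dot y = -\alpha(y) > 0$ for $y<0$ to get attractivity of the set), and it is sound. You also correctly identify the two genuine technical points: since an extended class-$\mathcal{K}$ function need not be locally Lipschitz at the origin, one must invoke the one-sided comparison lemma with the extremal solution of the scalar ODE rather than assume uniqueness; and the passage from $h(\x(t))\to 0$ to asymptotic stability of the set $\mathcal{S}$ in the state space is exactly where the nondegeneracy condition $\partial_{\x}h \neq 0$ on $\partial\mathcal{S}$ is needed. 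The only caveats worth adding are that the conclusion should be phrased on the maximal interval of existence of the closed-loop solution, and that the asymptotic-stability claim implicitly requires $h(\x(0))$ to lie in the domain $(-b,a)$ of $\alpha$; both are standard and also left implicit in the cited source.
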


The way controller synthesis induced by CBFs are implemented in practice is to use them as \textit{safety filters} (\cref{fig:safety_filter}), transforming a nominal state-feedback control input $\ud(\x)$ into a new state-feedback control input $\u(\x)$ in a minimally invasive fashion in order to guarantee forward invariance of $\mathcal{C}$. In practice, the following Quadratic Program (QP) is solved:
\begin{equation}
    \label{eq:CBFcontrol}
    \begin{aligned}
        \u(\x)= & \argmin_{\u\in \mathcal{U}} \quad  ||\u-\ud(\x) ||^2\\
         & \mathrm{s.t.}  \quad  \dot{h}(\x,\u) \geq -\alpha(h(\x)) 
    \end{aligned}
\end{equation}
The transformation of the nominal control input $\ud(\x)$ into the new state-dependent control $\u(\x)$ by solving \eqref{eq:CBFcontrol} is referred to as \textit{safety-critical control}, and the fact that the constraint is linear in the input allows for efficient real-time implementations of such a controller.
In this work we will take advantage of the additive decomposition of the safety-critical control:
\begin{equation}\label{eq:safetycriticaldecomposition}
    \u(\x)=\ud(\x)+\u_{\mathrm{safe}}(\x),
\end{equation}
resulting from the solution of \eqref{eq:CBFcontrol}, where clearly $\u_{\mathrm{safe}}(\x)=0$ when $\dot{h}(\x,\u) \geq -\alpha(h(\x))$.

\begin{figure}[ht]
	\centering
	\includegraphics[width=0.75\linewidth]{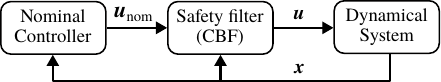}
	\caption{CBF-based safety filter.}
	\label{fig:safety_filter}
\end{figure}

% CBFs are typically implemented as a safety filter to modify the input of a primary ('nominal') controller $\u_\mathrm{des}$ as little as possible to make it satisfy the safety condition:
% \begin{equation} \label{eq:CBFcontrol}
%     \begin{split}
%         &\u = \mathrm{argmin}_{\u^*} || \u^*-\u_\mathrm{des}||^2\\
%         &\mathrm{subject\:to\:}\Psi(\x, \u)\ge 0,
%         % &\mathrm{subject\:to\:}\dot{h}(\x, \u) \ge -\alpha(h(\x)),
%     \end{split}
% \end{equation}
% where
% \begin{equation}
%     \Psi(\x,\u) = \dot{h}(\x, \u) + \alpha(h(\x)).
% \end{equation}
% Note that multiple CBFs may have to be satisfied congruently instead of just the one shown here. This optimization problem can be rewritten to a standard-form Quadratic Program (QP) and solved numerically, or, in the case of a single CBF, an analytic solution can be derived.

%%%%%%%%%%%%%%%%%%%%%%%%%%%%%%%%%%%%%%%%%%%%%%%%%%%
\subsubsection{Robots and their energetic analysis}
In this work we will consider fully actuated $n$-Degree-of-Freedom torque-controlled robots, represented as a $2n$-dimensional system whose state belongs to the tangent bundle of the robot's configuration manifold, expressed in the usual Lagrangian canonical coordinates $\q \in \mathbb{R}^n$ and $\dot{\q} \in \mathbb{R}^n$.
The system dynamics are expressed by the Euler-Lagrange equations:
\begin{equation} \label{eq:eulerlagrangesystem}
    \D\ddot{\q}+\C\dot{\q}+\G=\B\u,
\end{equation}
with $\D$ the inertia matrix, $\C$ the Coriolis matrix, $\G$ the gravity vector, and $\B$ the full rank actuation matrix. System \eqref{eq:eulerlagrangesystem} admits a representation as a control affine system \eqref{eq:affinecontrolsystem} when choosing $\x = (\q^{\top} ,\dot{\q}^{\top} )^{\top} \in \mathbb{R}^{2n}$, and as such CBF-based algorithms can be applied.

In the sequel we will perform analysis involving the kinetic energy:
\begin{equation}
    \Ke(\q, \dot{\q})=\frac{1}{2}\dot{\q}^{\top}\D(\q) \dot{\q}.
\end{equation}
Using \eqref{eq:eulerlagrangesystem} and the skew symmetry of the matrix $\dot{\boldsymbol{D}}-2\boldsymbol{C}$ (see e.g., \cite{Singletary2021}), it is straightforward to verify that the rate of change of the kinetic energy along solutions of \eqref{eq:eulerlagrangesystem} verifies:
\begin{equation}\label{eq:variationK}
    \dot{K}_\mathrm{e}(\q, \dot{\q})=-\dot{\q}^{\top}\G+\dot{\q}^{\top}\B \u,
\end{equation}
where the last term $\dot{\q}^{\top}\B \u$ represents the instantaneous mechanical power that the controller injects into the robot.

%%%%%%%%%%%%%%%%%%%%%%%%%%%%%%%%%%%%%%%%%%%%%%%%%%%
\subsection{Bounding kinetic energy with CBFs}
Since we aim to set an upper bound $\Km$ on the robot's kinetic energy, we propose to encode the safe set $\mathcal{S}=\{(\q^{\top},\dot{\q}^{\top} )^{\top} \in \mathbb{R}^{2n} : \Ke(\q, \dot{\q}) \leq \Km\}$ through the CBF 
\begin{equation}\label{eq:ECBF}
    h(\q,\dot{\q}) = \Km -\Ke(\q,\dot{\q}) = \Km-\frac{1}{2}\dot{\q}^\top \D\dot{\q}.
\end{equation}
Using \eqref{eq:variationK}, it immediately follows that
\begin{equation}\label{eq:CBFdot}
    \dot{h}(\q,\dot{\q},\u)=\dot{\q}^{\top}\G-\dot{\q}^{\top}\B \u.
\end{equation}
If \eqref{eq:ECBF} is a valid CBF for the controlled robot, then \cref{thm:1} guarantees that, if we are able to solve \eqref{eq:CBFcontrol}, the kinetic energy limit is always respected. 

For the following analysis, let us define the quantity 
\begin{equation}\label{eq:Psi}
    \Psi(\q, \dot{\q}, \u) =\dot{h}(\q,\dot{\q},\u)+\alpha(h(\q,\dot{\q})),
\end{equation}
so that the constraint in \eqref{eq:CBFcontrol} reads $\Psi(\q, \dot{\q}, \u)\geq 0$, or, when expanded using \eqref{eq:ECBF} and \eqref{eq:CBFdot}:
\begin{equation} \label{eq:ECBFconstraint}
    -\dot{\q}^\top \B \u + \dot{\q}^\top \G
    \ge -\alpha\left(\Km - \frac{1}{2}\dot{\q}^\top \D \dot{\q}\right).
\end{equation}
Notice that the left hand side of this expression is the negative of the net power flowing into the robot, due to gravitational effects and due to the control $\u$. 

The following result states an interesting fact: when using a CBF in the form \eqref{eq:ECBF}, the safety-critical component $\u_{\mathrm{safe}}$ in the decomposition \eqref{eq:safetycriticaldecomposition} only injects negative power into the underlying closed-loop system \eqref{eq:eulerlagrangesystem} controlled with $\ud$.

\begin{theorem} \label{th:dampingonly}
    Let \eqref{eq:ECBF} be the CBF acting on a system \eqref{eq:eulerlagrangesystem}, controlled with nominal input $\ud(\q,\dot{\q})$ and resulting in the safety-critical decomposition \eqref{eq:safetycriticaldecomposition}.
    The total power injected by the safety filter is always non-positive, that is,
    \begin{equation} \label{eq:dampingonly}
        P_{\mathrm{safe}} := \dot{\q}^\top \B \u_\mathrm{safe} \le 0.
    \end{equation}
\end{theorem}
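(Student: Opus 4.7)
The plan is to analyze the KKT conditions of the quadratic program \eqref{eq:CBFcontrol}, which is particularly clean here because the safety constraint \eqref{eq:ECBFconstraint} is \emph{linear} in $\u$. I would split the argument into the two possible regimes of the safety filter.

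First, if $\ud(\q,\dot{\q})$ already satisfies $\Psi(\q,\dot{\q},\ud) \ge 0$, then $\ud$ is feasible and is clearly the (unique) minimizer of $\|\u - \ud\|^2$, so $\u_\mathrm{safe} = 0$ and $P_\mathrm{safe} = 0$. Since this case is trivial, the content of the theorem lies in the active regime, $\Psi(\q,\dot{\q},\ud) < 0$, where the constraint is binding at the optimum.

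In the active regime I would apply the KKT conditions to the strictly convex QP. Writing $\Psi(\q,\dot{\q},\u) = \dot{\q}^\top \G - \dot{\q}^\top \B\u + \alpha(h(\q,\dot{\q}))$ from \eqref{eq:Psi} and \eqref{eq:CBFdot}, one gets $\partial \Psi/\partial \u = -\B^\top \dot{\q}$. Stationarity of the Lagrangian $L(\u,\mu) = \|\u - \ud\|^2 - \mu \Psi(\q,\dot{\q},\u)$ then yields
\begin{equation*}
    2(\u - \ud) + \mu \, \B^\top \dot{\q} = 0,
\end{equation*}
so that $\u_\mathrm{safe} = \u - \ud = -\tfrac{\mu}{2} \B^\top \dot{\q}$ with dual multiplier $\mu \ge 0$. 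Substituting into $P_\mathrm{safe}$ gives
\begin{equation*}
    P_\mathrm{safe} = \dot{\q}^\top \B \, \u_\mathrm{safe} = -\tfrac{\mu}{2} \, \|\B^\top \dot{\q}\|^2 \le 0,
\end{equation*}
which, combined with the trivial case above, establishes \eqref{eq:dampingonly}. As a bonus, the explicit form $\u_\mathrm{safe} \propto -\B^\top \dot{\q}$ is exactly a (generalized) damping action, giving a quantitative backing to the statement in the paper's discussion that the filter only injects damping.

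The only delicate point is legitimising the use of KKT: I would note that the objective is strictly convex, the constraint is affine in $\u$ (so any standard constraint qualification holds whenever $\B^\top \dot{\q} \ne 0$), and if $\B^\top \dot{\q} = 0$ then full-rank $\B$ implies $\dot{\q}=0$, in which case $\Ke = 0 < \Km$, so the constraint is inactive and we are back in the trivial case. So this is more of a bookkeeping remark than a real obstacle; the essence of the proof is the one-line computation above.
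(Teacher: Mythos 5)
Your proof is correct, but it takes a genuinely different route from the paper's. The paper argues purely from \emph{feasibility}: in the active regime $\Psi(\q,\dot{\q},\ud)<0$, it observes that the returned input satisfies the CBF constraint $\dot{\q}^\top\B\u \le \alpha(h)+\dot{\q}^\top\G$, and that the right-hand side is itself bounded above by $\dot{\q}^\top\B\ud$ precisely because $\Psi(\q,\dot{\q},\ud)\le 0$; chaining the two inequalities gives $P_{\mathrm{safe}}\le 0$ without ever invoking optimality of $\u$. You instead exploit \emph{optimality} via the KKT stationarity condition, obtaining $\u_{\mathrm{safe}}=-\tfrac{\mu}{2}\B^\top\dot{\q}$ with $\mu\ge 0$ and hence $P_{\mathrm{safe}}=-\tfrac{\mu}{2}\|\B^\top\dot{\q}\|^2\le 0$. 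Each approach buys something: yours recovers, essentially for free, the closed-form solution \eqref{eq:closedFormCBF} that the paper states separately (solving the active constraint for $\mu$ gives exactly $\u_{\mathrm{safe}}=\Psid\,\B^\top\dot{\q}/\|\B^\top\dot{\q}\|^2$), and it makes the ``pure damping'' structure $\u_{\mathrm{safe}}\propto-\B^\top\dot{\q}$ explicit rather than only sign-definite. The paper's argument, on the other hand, is more general in a way that matters for their implementation: since it uses only that the output is feasible for the CBF constraint, it remains valid for \emph{any} feasible point of the QP and in particular when $\mathcal{U}\subsetneq\mathbb{R}^m$ (e.g., with the input saturation limits the authors add in their OSQP implementation), whereas your stationarity condition would acquire extra normal-cone terms from active input bounds and the sign conclusion would no longer follow so directly. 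Your handling of the degenerate case $\B^\top\dot{\q}=0$ and of the constraint qualification is sound (a single affine constraint needs no CQ), so there is no gap --- just be aware that your argument is implicitly tied to the unconstrained-input setting.
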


\begin{proof}
    We distinguish two cases. First, if $\Psi(\q, \dot{\q}, \ud) \ge 0$, the safety constraint in \eqref{eq:CBFcontrol} is satisfied with the trivial solution $\u = \ud$, and as such $\u_{\mathrm{safe}}=0$ and $P_{\mathrm{safe}}=0$. 
    
    Secondly, we consider the case $\Psi(\q, \dot{\q}, \ud) < 0$.
    We need to show that \eqref{eq:dampingonly} holds, which can be rewritten using \eqref{eq:safetycriticaldecomposition} as
    \begin{equation}\label{eq:powerproof1}
        \dot{\q}^\top \B \u \le \dot{\q}^\top \B \ud.
    \end{equation}
    The CBF enforces the inequality \eqref{eq:ECBFconstraint}, which is rewritten as:
    \begin{equation} \label{eq:powerproofoutside}
        \dot{\q}^\top \B \u \le \alpha(h(\q, \dot{\q}))  + \dot{\q}^\top \G.
    \end{equation}
    The task is now to show that \eqref{eq:powerproofoutside} implies \eqref{eq:powerproof1}. A sufficient condition for the latter proposition to be true is that:
    \begin{equation}
        \alpha(h(\q, \dot{\q}))+\dot{\q}^\top \G \le \dot{\q}^\top \B \ud,
    \end{equation}
    which can be rewritten as
    \begin{equation} \label{eq:powerproof2}
        \dot{h}(\q, \dot{\q},\ud) + \alpha(h(\q, \dot{\q}))
        = \Psi(\q, \dot{\q}, \ud)
        \le 0,
    \end{equation}
    which is true by assumption.
    %
    % Suppose this is not the case:
    % \begin{equation} %\label{eq:powerproof2}
    %     \dot{h}(\q, \dot{\q},\u_\text{des}) \ge -\alpha(h(\q, \dot{\q})),
    % \end{equation}
    % then the solution to \cref{eq:CBFcontrol} is $\u=\u_\text{des}$ and $P_\text{safe}=0$, because $\u_\text{des}$ already satisfies the safety constraint. In all other cases, \cref{eq:powerproof2} holds and $P_\text{safe}\le0$
    %
    % \comment{So while we cannot prove \cref{eq:powerproofoutside}, we show that the system is safe whether it }
    %
    % Note that this is opposite to the safety constraint in \cref{eq:CBFcontrol}. Therefore, the power input of the CBF filter is strictly negative if the desired control input breaks the safety constraint. Otherwise, if the desired input is already safe, $\u=\u_\mathrm{des}$ and the CBF injects zero power.
\end{proof}

It is worth noticing that the previous result holds independently on the specific control law $\ud(\q,\dot{\q})$: whatever design is chosen for the nominal controller, if a CBF in the form (\ref{eq:ECBF}) is used, the safety-critical component of the closed-loop system $\u_{\mathrm{safe}}$ will always act in a way to extract mechanical energy from the system.

\begin{remark}[Relation to energy-based CBFs in \cite{Singletary2021}]
    The CBF \eqref{eq:ECBF} shares some properties with so called energy-based CBFs as introduced in \cite{Singletary2021} for safety-critical kinematic control, defined as $h(\q,\dot{\q})=-K_e(\q,\dot{\q})+\beta \bar{h}(\q)$ with $\beta>0$. In particular the proposed CBF (\ref{eq:ECBF}) shares a technical advantage with the one in \cite{Singletary2021}: the safety constraint \eqref{eq:ECBFconstraint} is independent of the Coriolis matrix, reducing model dependence and computational complexity while solving \eqref{eq:CBFcontrol}.
\end{remark}

\begin{remark}[Passivity/Stability preservation property of (\ref{eq:ECBF})]
    A convenient consequence of a negative power injection by the safety-critical control component $\dot{\q}^{\top} \B \, \u_{\mathrm{safe}}\leq 0$ is the following: if the controlled system  with $\ud(\q,\dot{\q})$ is passive (or stable), then the safety-critical control preserves the closed-loop passivity (or stability) properties of the nominal controller. This fact allows for assessing passivity of the critically controlled closed-loop system without the use of extra passivising framework such as energy tanks as done in e.g., \cite{Raiola2018DevelopmentRobots,Michel2024}.
    In \cite{Califano2023} a variation of this result (with different generality and different proof) was indeed given in the framework of passivity-based control.\end{remark}

Another important result in the CBF framework is that, under the conditions of \cref{th:dampingonly}, together with the extra assumption that $\mathcal{U}=\mathbb{R}^m$, the solution \eqref{eq:safetycriticaldecomposition} assumes the analytic expression:
\begin{equation}\label{eq:closedFormCBF}
 \u_{\mathrm{safe}}=   \begin{cases}
        \frac{\B^\top \dot{\q}}{||\B^\top \dot{\q}||^2} \Psid & \text{if } \Psid < 0\,,\\
        0           & \text{otherwise.}
    \end{cases}
\end{equation}
This expression is useful since it induces an analytic expression for the power injected in the system by the controller, clearly displaying the role of the function $\alpha(\cdot)$ in the CBF algorithm. For example, it is simple to see that using \eqref{eq:closedFormCBF} the expression \eqref{eq:dampingonly} becomes $P_{\mathrm{safe}}=\Psid$ (when $\Psid<0$), providing a quantitative measure on how much damping the safety-critical control injects into the system and how it can be modulated with $\alpha(\cdot)$. In the next section we will use \eqref{eq:closedFormCBF} to perform a thorough power analysis involving unmodelled external interactions.

%%%%%%%%%%%%%%%%%%%%%%%%%%%%%%%%%%%%%%%%%%%%%%%%%%%
\subsection{External interaction forces}
The dynamic system \eqref{eq:eulerlagrangesystem} does not include external torques caused by disturbances or interaction. For an external generalised force vector $\te\in \mathbb{R}^n$, the system becomes:
\begin{equation}
    \label{eq:eulerlagrangesysteminteractive}
    \D\ddot{\q}+\C\dot{\q}+\G=\B\u + \te
\end{equation}
where typically the external torques are expressed as $\te=\boldsymbol{J}(\q)^{\top}\textbf{f}$, where $\textbf{f}$ are interaction forces applied at the end-effector of the robot and $\boldsymbol{J}(\q)$ denotes the end-effector Jacobian matrix.

Obtaining these external interaction forces and including them in the model leads to the substitution of \eqref{eq:ECBFconstraint} with the new constraint:
\begin{equation}
    \label{eq:ECBFconstraintinteractive}
    \underbrace{\dot{\q}^\top (-\B \u -\te + \G)}_{\dot{h}(\q,\dot{\q}, \u)} 
    \ge -\alpha\left(\Km - \frac{1}{2}\dot{\q}^\top \D \dot{\q}\right).
\end{equation}
The challenge lies in measuring $\te$, or, in practice, finding a good estimate $\teh$. If accurately estimated, the previously presented control scheme using (\ref{eq:ECBF}) and \eqref{eq:ECBFconstraintinteractive} will keep the kinetic energy below the desired limit. We refer to this case in the sequel as \textit{interaction-aware}.
If these torques are \textit{not} taken into account in the model (i.e., the the CBF algorithm is implemented with the constraint \eqref{eq:ECBFconstraint}), system invariance cannot in general be guaranteed. We refer to this case in the sequel as \textit{interaction-agnostic}. In the following we state a result that gives insight on the behaviour of the controlled system in interaction-agnostic case.

\begin{theorem}\label{th:povershoot}
    Consider system \eqref{eq:eulerlagrangesysteminteractive} with unknown external torques $\te$ producing a positive power inflow $\Pe=\dot{\q}^{\top}\te\geq 0$. Let the system \eqref{eq:eulerlagrangesysteminteractive} be controlled with the CBF \eqref{eq:ECBF} and assume $\mathcal{U}=\mathbb{R}^m$. If the system converges to a positive constant kinetic energy value with $\u_{\mathrm{safe}}\neq 0$, the kinetic energy error $\Ke-\Km$ converges to the relation:
    \begin{equation}\label{eq:powerovershoot}
        \Ke-\Km = \alpha^{-1}(\Pe).
    \end{equation}
\end{theorem}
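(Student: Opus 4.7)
My plan is to exploit the closed-form expression \eqref{eq:closedFormCBF} for the safety-critical input and reduce the problem to a scalar steady-state balance. The first observation is that, whenever $\u_\mathrm{safe}\neq 0$, we lie in the active regime $\Psid<0$, in which case the CBF constraint in \eqref{eq:CBFcontrol} is saturated. I would verify this quantitatively by plugging \eqref{eq:closedFormCBF} into $\dot{\q}^\top \B \u_\mathrm{safe}$ and using $\dot{\q}^\top \B \B^\top \dot{\q} = \|\B^\top \dot{\q}\|^2$ to obtain $\dot{\q}^\top \B \u_\mathrm{safe} = \Psid$. Combining this with \eqref{eq:Psi} and \eqref{eq:CBFdot} yields the tight identity
\[\dot{\q}^\top \G - \dot{\q}^\top \B \u = -\alpha(h(\q,\dot{\q})),\]
which is the CBF relation as seen by the controller in the interaction-agnostic regime.

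Next I would compute the actual evolution of $h = \Km - \Ke$ along the interactive dynamics \eqref{eq:eulerlagrangesysteminteractive}. Redoing the derivation that led to \eqref{eq:variationK}, the skew-symmetry of $\dot{\boldsymbol{D}} - 2\boldsymbol{C}$ still cancels the Coriolis term and the only new contribution is the external power $\dot{\q}^\top \te = \Pe$, giving
\[\dot{h} = \dot{\q}^\top \G - \dot{\q}^\top \B \u - \Pe.\]
Substituting the tight identity from the previous step eliminates the controller-dependent quantities and leaves the reduced equation $\dot{h} = -\alpha(h(\q,\dot{\q})) - \Pe$, valid throughout the active-filter regime.

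The final step invokes the hypothesis that the system converges to a positive constant kinetic energy with the filter active: this forces $\dot{K}_\mathrm{e}\to 0$, hence $\dot{h}\to 0$, so in the limit $\alpha(h) = -\Pe$. Strict monotonicity of the extended class $\mathcal{K}$ function $\alpha$ allows inversion, producing $h = \alpha^{-1}(-\Pe)$ and therefore $\Ke - \Km = -\alpha^{-1}(-\Pe)$, which matches \eqref{eq:powerovershoot} under the customary odd (linear being the typical case) choice of $\alpha$.

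The main obstacle I anticipate is not algebraic but structural: I must make sure \eqref{eq:closedFormCBF} is well-defined along the asymptotic trajectory. The hypothesis $\mathcal{U}=\mathbb{R}^m$ renders the underlying QP unconstrained, and a strictly positive limit $\Ke$ together with $\B$ full rank implies $\B^\top \dot{\q}\neq 0$ on the limiting set, keeping the denominator in \eqref{eq:closedFormCBF} bounded away from zero. Showing that this non-degeneracy persists along the whole converging trajectory — not only at the limit — is the delicate point; once it is secured, the remaining derivation is a short algebraic manipulation.
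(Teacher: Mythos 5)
Your proposal is correct and follows essentially the same route as the paper's proof: substitute the closed-form safety input \eqref{eq:closedFormCBF} into the interactive power balance, impose $\dot{K}_\mathrm{e}=0$, and invert $\alpha(h)=-\Pe$. Your additional observations --- that the stated identity $\Ke-\Km=\alpha^{-1}(\Pe)$ strictly requires $\alpha$ to be odd (automatic in the linear case $\alpha(h)=\gamma h$), and that $\B^\top\dot{\q}\neq 0$ must hold along the trajectory for \eqref{eq:closedFormCBF} to be well defined --- are valid refinements that the paper leaves implicit.
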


\begin{proof}
    The variation of kinetic energy of \eqref{eq:eulerlagrangesysteminteractive} results in:
    \begin{equation}
        \dot{K}_\mathrm{e}(\q, \dot{\q})=-\dot{\q}^{\top}\G+\dot{\q}^{\top}\B \u +\Pe,
    \end{equation}
    where $\Pe>0$ contributes to a transient increase of kinetic energy. 
    Imposing the steady-state condition $\dot{K}_\mathrm{e}=0$, substituting \eqref{eq:closedFormCBF} in the case $\Psi(\q,\dot{\q},\ud)<0$ (because $\u_{\mathrm{safe}}\neq 0)$ and using the assumption of a non-zero steady-state kinetic energy (i.e., $\dot{\q} \neq 0$) one obtains:
    \begin{equation}
      - \dot{\q}^\top \G + \dot{\q}^{\top}\B \ud + \Psi(\q,\dot{\q},\ud) +\Pe = 0.
    \end{equation}
    Now, using \eqref{eq:CBFdot} and \eqref{eq:Psi}, the power balance simplifies to\footnote{It is worth noticing that \eqref{eq:steadystatepower} does not depend on $\ud$, even if the control action \eqref{eq:closedFormCBF} does.}
    \begin{equation}
    \label{eq:steadystatepower}
        \alpha(h(\q,\dot{\q}))=-\Pe,
    \end{equation}
    from which \eqref{eq:powerovershoot} follows, due to the invertibility of the class $\mathcal{K}$ function $\alpha(\cdot)$.
\end{proof}

\begin{remark} \label{rem:3}
    The last result gives further insight on the role of the class $\mathcal{K}$ function $\alpha$. For example, as easily demonstrated using a linear function $\alpha(h)=\gamma h$ with $\gamma>0$, an increase in $\gamma$ reduces the kinetic energy error that is incurred due to unmodelled power flows, as $\Ke-\Km \leq \Pe/ \gamma$. At a design stage, this fact needs to be traded with the advantages of choosing a lower value of $\gamma$. Intuitively, lowering $\gamma$ induces a more conservative behaviour to achieve invariance in nominal conditions, tending to push the system state towards the safe set before reaching its boundary (see e.g., \cite{Ames2019}). As a consequence, lowering $\gamma$ corresponds to smoother closed-loop behaviour in nominal conditions, but also to poorer rejection in case of unmodelled external disturbances.
\end{remark}
\section{Experimental results}
\label{sec:experiments}
To assess the practical applicability of the approach presented in this paper in the context of imperfect torque tracking, sensor errors, discrete-time control, and communication delays, we present extensive experimental results.
This comprises four different experiments:
%We present three different experiments to validate the effectiveness of our approach: 
\begin{enumerate}
    \item A Cartesian step response;
    \item Contact loss with the environment;
    \item External power input by human-robot interaction;
    \item Kinetic energy error validation (\cref{th:povershoot}).
\end{enumerate}
% \todo{In fully written form:\\
% In the first experiment, we demonstrate the Cartesian step response of the robot\comment{~for an increasingly conservative CBF safety filter}. The second experiment shows the behaviour after a sudden loss of contact with the environment, causing a rapid release of the virtual spring energy. Lastly, the robot will be subjected to an external power input by virtually and manually pushing it. This will be done with and without external torque estimation and compensation.}

The nominal control action for the first two experiments is generated by an underdamped Cartesian impedance controller with a stiffness of 200\,N/m and a damping of 6\,N\,s\,m\textsuperscript{-1}. For the latter two experiments, the nominal controller is deactivated\footnote{Note that gravity and friction compensation are still present as part of the lower-level Franka Control Interface (FCI).}.
For all experiments, we choose $\alpha(h)=\gamma h$, and we will investigate its influence: when decreasing $\gamma$, we expect increasingly conservative behaviour (see \cref{rem:3}).

\subsection{Experimental Setup}
% \begin{figure}
%     \centering
%     \includegraphics[width=.7\columnwidth]{}
%     \caption{Franka Emika Panda robot used for the experiments.}
%     \label{fig:Franka_Picture}
% \end{figure}

% \begin{figure}
%     \centering
%     \includegraphics[width=\columnwidth]{img/slideshow.pdf}
%     \caption{\todo{DEMO DEMO DEMO}}
%     \label{fig:Franka_slideshow}
% \end{figure}

Experiments are performed on a Franka Emika Panda 7-DoF robotic arm. The Franka Control Interface (FCI) provides a ROS-interface for joint torque commands at 1000\,Hz, with built-in gravity and friction compensation active by default. As a result, we set $\G = \mathbf{0}$ in \cref{eq:eulerlagrangesystem,eq:eulerlagrangesysteminteractive}.
%\todo{The robot is therefore approximated as a weightless, frictionless, continuous-time plant.}
The interface provides the inertia tensor $\D$, external torque estimate $\teh$ and state information $\q$, $\dot{\q}$. A schematic representation of the architecture is shown in \cref{fig:controlsetup}.

\begin{figure}[ht]
    \centering
    \includegraphics[width=.68\linewidth]{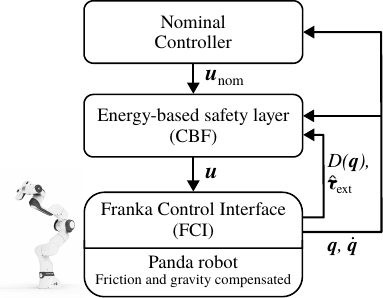}
    \caption{Control architecture of the experimental setup.}
    \label{fig:controlsetup}
\end{figure}

To reduce the effect of sensor noise on the velocity estimates, we use a discrete joint acceleration rate limiter:
\begin{equation*}
    \dot{\q}_k = \dot{\q}_{k-1} + \min(\max(\dot{\q}_{k}-\dot{\q}_{k-1}, -\Delta_\mathrm{t} \ddot{\q}_\text{max}), \Delta_\mathrm{t} \ddot{\q}_\text{max}),
\end{equation*}
where the maximum joint acceleration $\ddot{\q}_\text{max}$ is set to the robot's documented limits plus a 20\% margin. $\Delta_\mathrm{t}$ denotes the time interval between consecutive measurements $\dot{\q}_{k-1}$ and $\dot{\q}_k$. The benefit of an acceleration saturation filter is that it does not introduce delay and enforces an upper bound on the noise amplitude without attenuating the signal itself.

Despite the existence of an analytical solution \eqref{eq:closedFormCBF} to the QP \eqref{eq:CBFcontrol}, our implementation leverages the OSQP quadratic program solver \cite{Stellato2020}. The reason is that the analytical solution does not allow including input saturation limits or stacking of additional (CBF) constraints. This allows straightforward extension to a more elaborate safety filter in the future.
% The analytical solution \ref{eq:analyticalsolution} will not be used for our controller, as it does not allow the stacking of additional CBFs, nor adhere to input constraints. Instead, the optimal solution is computed by the OSQP quadratic program solver \cite{Stellato2020}.
The solver reaches sufficient convergence to the analytical solution well within the 1\,ms sample time.

%%%%%%%%%%%%%%%%%%%%%%%%%%%%%%%%%%%%%%%%%%%%%%%%%%%
\subsection{Experiment 1: Step response}
In this experiment, the equilibrium setpoint of the nominal impedance controller is moved by 40\,cm in the (horizontal) y-direction by a square wave signal. As a result, it will attempt to inject a significant amount of virtual potential energy into the physical robot. When the safety filter is active, the kinetic energy limit $\Km=1$\,J. We repeat the experiment for $\gamma \in \lbrace 1,2,10,50 \rbrace$, and with the CBF disabled.

The end-effector trajectories (y-position) are shown in \cref{fig:step_EEposition}, with the corresponding kinetic energy in \cref{fig:step_TotalEnergy}. The latter shows that the CBF effectively limits the kinetic energy, becoming more conservative with lower values of $\gamma$. In contrast, for the case without safety filter, the kinetic energy reaches up to 2.3\,J.

\begin{figure}[ht]
	\centering
	\includegraphics[width=\figwidth]{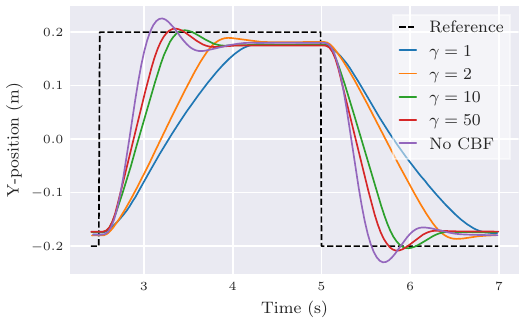}
	\caption{Experiment 1 (Step response): End-effector y-position.}
	\label{fig:step_EEposition}
\end{figure}

\begin{figure}[ht]
	\centering
	\includegraphics[width=\figwidth]{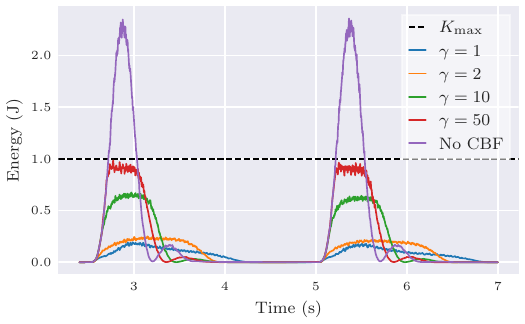}
	\caption{Experiment 1 (Step response): Total kinetic energy.}
	\label{fig:step_TotalEnergy}
\end{figure}

\cref{fig:step_PowerInjection} shows the power input of the safety filter, given by $\dot{\q}^\top (\ud - \u)$. For all experiments, total safety filter power can be observed to be non-positive, demonstrating that the safety filter only applies damping to the system as predicted by \cref{th:dampingonly}. Notice that for an individual joint the injected power may indeed be positive, however the total power input is always non-positive.  \cref{fig:step_torqueplot} shows the associated joint control torques for the experiment with $\gamma=50$. The commanded input $\u$ is identical to the desired input $\ud$ until intervention is necessary, demonstrating that the safety filter is minimally invasive.
% The difference between the desired and filtered control action is the total CBF contribution. Multiplying this with the corresponding joint velocity results in the the total 'power input' of \cref{fig:step_PowerInjection}.

\begin{figure}[ht]
	\centering
	\includegraphics[width=\figwidth]{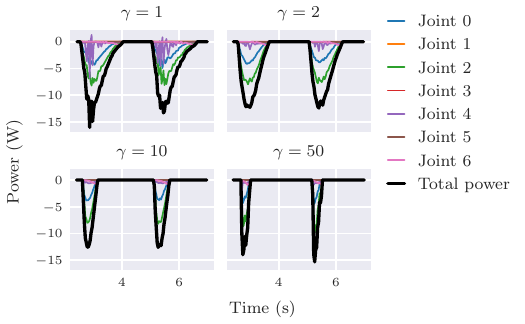}
	\caption{Experiment 1 (Step response): Safety filter power injection.}
	\label{fig:step_PowerInjection}
\end{figure}

\begin{figure}[ht]
	\centering
	\includegraphics[width=\figwidth]{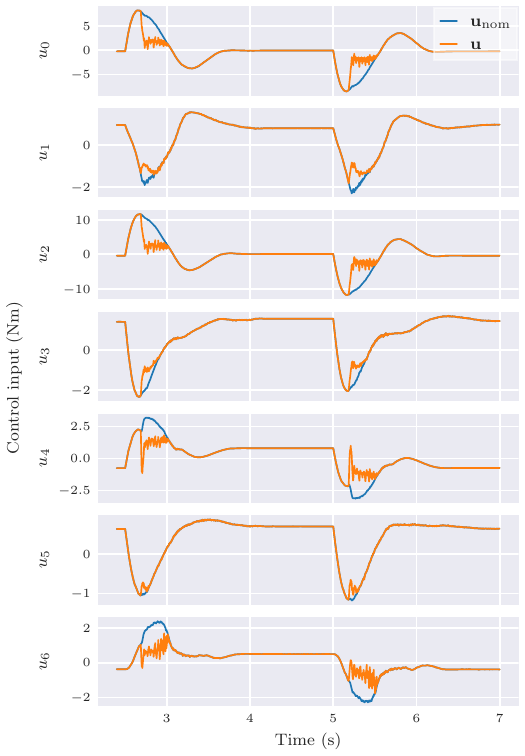}
	\caption{Experiment 1 (Step response): Nominal desired control action $\ud$ compared to the filtered control action $\u$ ($\gamma=50$).}
	\label{fig:step_torqueplot}
\end{figure}

% % OLD OLD OLD
% \todo{
% , with a kinetic energy limit of 1 J, its end-effector moves as shown in \cref{fig:step_EEposition}. Based on the overshoot, one can qualitatively observe that the system becomes more damped as the safety filter becomes more conservative, indicated by a lower $\gamma$. Whereas the nominal controller (No CBF) is underdamped,\todo{ with a damping ratio of 0.2,} the response for $\gamma=1$ resembles an overdamped system. This is confirmed by \cref{fig:step_TotalEnergy} which shows the total kinetic energy. For an unmodified system, the kinetic energy shoots up to 2.3 J, but when the filter is activated, the limit is not exceeded and the kinetic energy peak reduces with $\gamma$.

% An important characteristic of this CBF is that it only applies damping. This is demonstrated in \cref{fig:step_PowerInjection}, where the power injected by the safety filter is plotted. While on a joint level the injected power may be positive, the combined power 'input' is always non-positive. In \cref{fig:step_torqueplot} the control action of each joint is plotted for the experiment where $\gamma=50$. The difference between the desired and filtered control action is the total CBF contribution. Multiplying this with the speed for each joint results in the the total 'power input' of \cref{fig:step_PowerInjection}.
% \comment{What is the preferred term: modulated or filtered? See also \cref{fig:step_torqueplot}.}}

%%%%%%%%%%%%%%%%%%%%%%%%%%%%%%%%%%%%%%%%%%%%%%%%%%%
\subsection{Experiment 2: Contact loss}
In this experiment, a string is attached to the end-effector, fixed to the base on the other end, and then brought under 50\,N of tension by lifting the equilibrium setpoint of the Cartesian impedance controller up by 25\,cm, resulting in approximately 6.25\,J of stored energy in the virtual spring. The fixed end of the string is then suddenly released, allowing the robot to accelerate upwards. This experiment is again repeated for $\gamma\in\{1,2,10,50\}$ and without the CBF.

%The resulting end-effector z-position is shown in \cref{fig:tension_EEposition}.
The resulting end-effector motion is shown in \cref{fig:Franka_slideshow_horizontal}, as snapshots of the experiment without and with the safety filter using $\gamma=50$.
Upon release of the string, the stored control energy is released and the end-effector rapidly moves up towards the equilibrium. This is similar to the robot slipping off a surface it is pushing against in a sudden loss-of-contact scenario. The total kinetic energy is shown in \cref{fig:tension_TotalEnergy} and the CBF-induced power injection is shown in \cref{fig:tension_PowerInjection}, both for all values of $\gamma$ respectively. The former shows that approx. 1.7~J of kinetic energy is injected by the nominal controller without safety filter, and that the excess is effectively dissipated when the safety filter is activated.
%\todo{For $\gamma=50$, this happens quite abruptly as one can observe in \cref{fig:tension_TorquePlot}.}
For $\gamma=50$ the energy limit is momentarily exceeded, which we attribute to limited torque tracking capability of the robot's actuators. However, this breach is small, and cases with more conservative values of $\gamma$ remain far from the boundary, suggesting that $\gamma=50$ might be slightly too high for the capabilities of this system. \cref{fig:tension_PowerInjection} shows the total safety filter power input, which is negative for all experiments as expected due to its damping nature. %This demonstrates that the CBF exclusively injected damping.
%\todo{This breach might also be caused by measurement noise, model imperfections or non-ideal gravity compensation (technically also caused by model imperfections). We are moving vertically after all... It would be tedious to investigate which ones of these are the real cause}

% \begin{figure}[ht]
% 	\centering
% 	\includegraphics[width=\figwidth]{}
% 	\caption{Experiment 2 (Contact loss): End-effector z-position.}
% 	\label{fig:tension_EEposition}
% \end{figure}

\begin{figure}[ht]
	\centering
	\includegraphics[width=\figwidth]{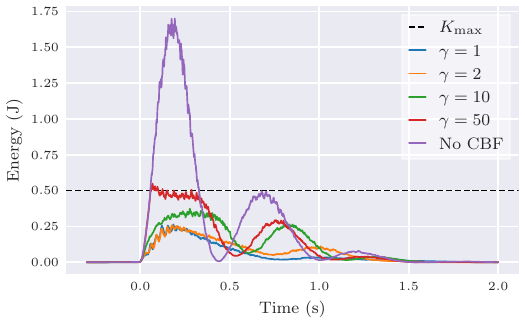}
	\caption{Experiment 2 (Contact loss): Total kinetic energy.}
	\label{fig:tension_TotalEnergy}
\end{figure}

\begin{figure}[ht]
	\centering
	\includegraphics[width=\figwidth]{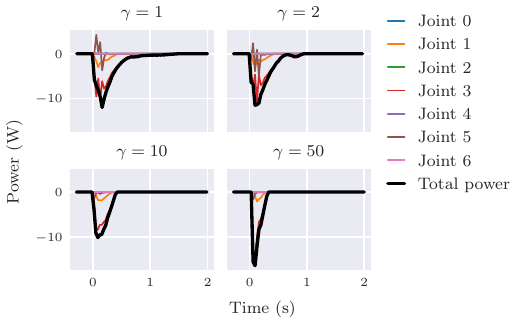}
	\caption{Experiment 2 (Contact loss): Safety filter power.}
	\label{fig:tension_PowerInjection}
\end{figure}

% \begin{figure}[ht]
% 	\centering
% 	\includegraphics[width=\figwidth]{}
% 	\caption{Sudden loss of contact experiment: Base control action versus cbf control action, $\gamma=50$.}
% 	\label{fig:tension_TorquePlot}
% \end{figure}

% \begin{figure*}[!t]
%     \centering
%     \includegraphics[width=0.8\linewidth]{}
%     \caption{Experiment 2 (Contact loss): Snapshots of the experiment. Top: No safety filter. Bottom: Safety filter with $\gamma=50$.}
%     \label{fig:Franka_slideshow_horizontal}
% \end{figure*}

\begin{figure*}[!t]
    \centering
    \includegraphics[width=0.8\linewidth]{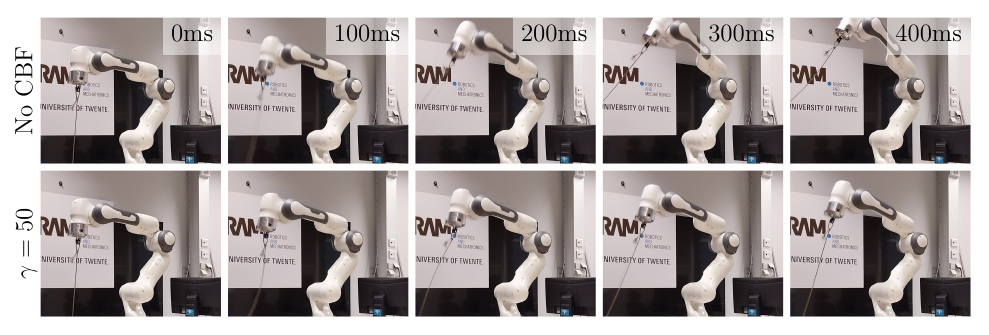}
    \caption{Experiment 2 (Contact loss): Snapshots of the experiment. Top: No safety filter. Bottom: Safety filter with $\gamma=50$.}
    \label{fig:Franka_slideshow_horizontal}
\end{figure*}

%%%%%%%%%%%%%%%%%%%%%%%%%%%%%%%%%%%%%%%%%%%%%%%%%%%
\subsection{Experiment 3: External interaction}
In the external interaction experiment, we disable the nominal controller (i.e., $\ud = \mathbf{0}$) and subject the robot to an unmodelled external power input, by physically pushing the end-effector by hand.
We compare three different cases: 
\begin{enumerate}
    \item Without safety filter;
    \item With interaction-agnostic safety filter (\cref{eq:ECBFconstraint});
    \item With interaction-aware safety filter (\cref{eq:ECBFconstraintinteractive}).
\end{enumerate}
%one without an active safety filter, one with the interaction-agnostic filter (\cref{eq:ECBFconstraint}) and one which estimates and compensates for external torques (\cref{eq:ECBFconstraintinteractive}).
In all cases the kinetic energy limit is set to $\Km=0.3$\,J and $\gamma=50$ for the safety filter.

The kinetic energy for all three cases is shown in \cref{fig:push_TotalEnergy}. \cref{fig:push_PowerFlow} shows the relevant corresponding power flows: 1) power injected by the operator, 2) power injected by the safety filter, and 3) their sum, which is the net power input into the system. Although the three experiments are not identical, the maximum operator power input is of comparable magnitude and duration. The red zones in \cref{fig:push_PowerFlow} indicate when the kinetic energy limit is exceeded (as per \cref{fig:push_TotalEnergy}).

Considering first \cref{fig:push_TotalEnergy}, we observe that both the interaction-agnostic and interaction-aware safety filters decrease the kinetic energy error compared to the experiment without safety filter. Critically, \cref{fig:push_TotalEnergy} shows that incorporating the estimate of the external power input reduces the error to near the kinetic energy limit, even if it is still momentarily exceeded. We attribute the latter to the relatively poor quality of the external torque estimate $\teh$.
%The interaction-agnostic filter keeps the energy error below 0.5\,J, which is the predicted limit given \cref{eq:powerovershoot} and the 25\,W peak external power input of \cref{fig:push_PowerFlow}. 

Now considering \cref{fig:push_PowerFlow}, we observe that the safety filter produces a larger negative power when provided with the estimate of external power input. This is consistent with the reduced kinetic energy error observed in \cref{fig:push_TotalEnergy}.
%The damping in this experiment never exceeds the power input, as the combined power is always positive. The interaction-aware controller was expected to stay below the energy limit, but the limit is exceeded, which we attribute to imperfect estimation of the external interaction torques. %However, the damping does exceed the input power when the kinetic energy limit is exceeded, demonstrating the additional damping the estimator provides.

% \begin{figure}[ht]
% 	\centering
% 	\includegraphics[width=\figwidth]{}
% 	\caption{Interaction experiment: End-effector position. Note the different scales on the ordinate axis.}
% 	\label{fig:push_EEposition}
% \end{figure}

\begin{figure}[ht]
	\centering
	\includegraphics[width=\figwidth]{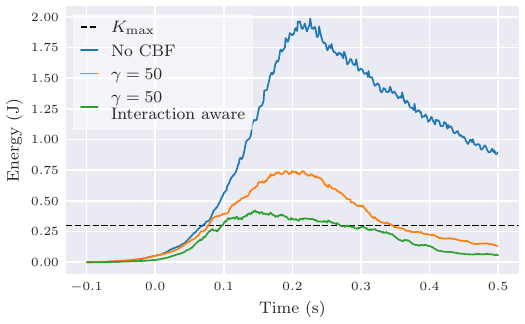}
	\caption{Experiment 3 (External interaction): Total kinetic energy.}
	\label{fig:push_TotalEnergy}
\end{figure}

\begin{figure}[ht]
	\centering
	\includegraphics[width=\figwidth]{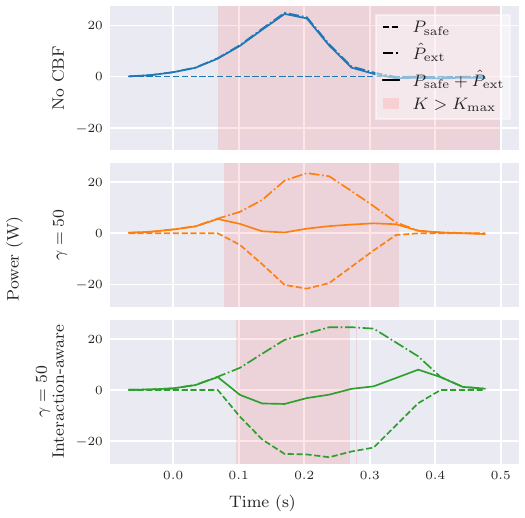}
	\caption{Experiment 3 (External interaction): Internal and external power injection for $\gamma=50$. The red zone indicates where the robot exceeds its kinetic energy limit.}
	\label{fig:push_PowerFlow}
\end{figure}

% \begin{figure}[ht]
% 	\centering
% 	\includegraphics[width=\figwidth]{}
% 	\caption{\comment{Interaction experiment: External power injection and CBF dissipation for $\gamma=50$. The red zone indicates where the robot exceeds its kinetic energy limit.}}
% 	\label{fig:push_PowerFlowNegated}
% \end{figure}
% \the\columnwidth

%%%%%%%%%%%%%%%%%%%%%%%%%%%%%%%%%%%%%%%%%%%%%%%%%%%
\subsection{Experiment 4: Kinetic energy error validation}
With this final experiment we aim to validate the kinetic energy error ($\Ke - \Km$) as predicted by \cref{th:povershoot}. This requires precise knowledge of the unmodelled power input, beyond what can be achieved through the robot's external torque estimation $\teh$ which can be of poor quality especially during dynamic motions. Hence, we achieve this by adding an additional term $\boldsymbol{u}_\mathrm{err}$ to the torque inputs generated by the CBF-based safety filter, as shown in \cref{fig:controlsetup_unmodelled_input}.

\begin{figure}[ht]
    \centering
    \includegraphics[width=.7\columnwidth]{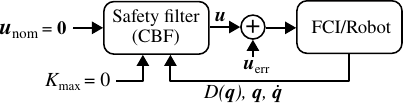}
    \caption{Experiment 4: Unmodelled power input through $\u_\mathrm{err}$.}
    \label{fig:controlsetup_unmodelled_input}
\end{figure}

The unmodelled power is injected by applying a virtual force horizontally in the y-direction at the end-effector. After an initial push to initiate motion, this force is regulated with velocity to provide a constant power input $P_\mathrm{ext}$:
\begin{align}
    % \dot{\mathbf{x}}_\mathrm{ee} &\equiv 
    % \begin{pmatrix}
    %     \dot{x}_\mathrm{ee,x}\\
    %     \dot{x}_\mathrm{ee,y}\\
    %     \dot{x}_\mathrm{ee,z}
    % \end{pmatrix}= J(\q)\dot{\q}, \\
    \boldsymbol{f}_\mathrm{ee}&=
    \begin{pmatrix}
        0 & P_\mathrm{ext}/\dot{x}_\mathrm{ee,y}  & 0
    \end{pmatrix},\\
    \u_\mathrm{err} &= \boldsymbol{J}^\top(\q) \boldsymbol{f}_\mathrm{ee}
\end{align}
where $\dot{x}_\mathrm{ee,y}$ denotes the y-direction component of the end-effector velocity $\dot{\x}_\mathrm{ee} = \boldsymbol{J}(\q)\dot{\q}$, and $\mathbf{f}_\mathrm{ee}$ denotes the applied virtual force.
%\todo{In the 'true' implementation, we set a bound $f_{max}$ on $P_{ext}/\dot{x}_{ee,y}$. For nonzero power, if $\dot{x}_{ee,y}/P_{ext}\le 1/f_{max}$, then $f=\begin{pmatrix}0 & f_{max} & 0\end{pmatrix}$}
The kinetic energy is then measured at steady-state, which occurs when the external input power and CBF-induced (damping) power are at equilibrium. We set $\Km=0$\,J, and perform the experiment for various power input values and $\gamma \in \lbrace 5,10,20,30,40,50 \rbrace$.

% In the first sub-experiment, we do this with a virtual external force horizontally applied to the end-effector.  The system reaches a steady-state kinetic energy when the external input power and CBF damping are at equilibrium. We measure the total kinetic energy at this point. In this experiment we set $\Km=0$\,J.

\cref{fig:interaction_overshootfit} shows the resulting steady-state kinetic energy (error), as function of the input power and for different values of $\gamma$. The coloured circles indicate the data, and the dashed lines show a linear least-squares fit per value of $\gamma$. The magnitude of the kinetic energy (and, as $\Km=0$\,J, its error) is lower for higher values of $\gamma$. In addition, the linear fits closely match the the linear relation predicted by \cref{eq:powerovershoot}, as the slope of each curve is approx. $\gamma^{-1}$. This confirms the prediction of \cref{th:povershoot}, that although a lower value of $\gamma$ produces more conservative behaviour regarding forward invariance of the safe set, rejection of unmodelled (external) disturbances is indeed reduced.

\begin{figure}[ht]
	\centering
	\includegraphics[width=\figwidth]{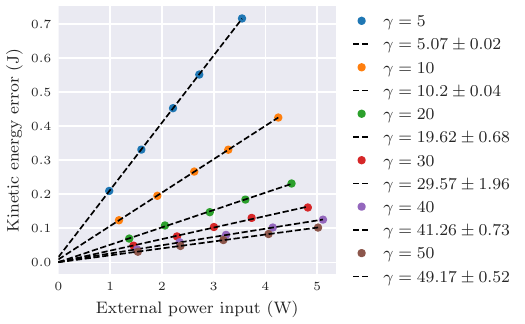}
	\caption{Experiment 4: Kinetic energy (error) versus external unmodelled power input, with linear least-squares fits.}
	\label{fig:interaction_overshootfit}
\end{figure}
\section{Discussion}
\label{sec:discussion}
% \todo{
% \begin{itemize}
%     \item Discussion exp 1+2 including effect of $\gamma$
%     \item Discussion exp3, including the previous discussion on whether to include the estimate and its practical implications
%     \item Discussion exp4
% \end{itemize}
% }

%%%%%%%%%%%%%%%%%%%%%%%%%%%%%%%%%%%%%%%%%%%%%%%%%%%
% Intro/summary of results, brief reiteration of the benefits and overall experience of the experimental validation.
The extensive experimental results presented in the previous Section demonstrate that a CBF-based safety filter is an effective approach to limiting the kinetic energy of a robot. We found in our experiments that it is easy to tune, requiring only a single parameter $\gamma$ and no knowledge of the nominal controller, and that it provides robust performance in a variety of situations.

%%%%%%%%%%%%%%%%%%%%%%%%%%%%%%%%%%%%%%%%%%%%%%%%%%%
%\subsection{Influence of $\gamma$}
The parameter $\gamma$ can be conveniently interpreted as a measure of conservatism, with lower values being increasingly conservative. This was clearly demonstrated by Experiments 1 and 2, in which reducing $\gamma$ kept the robot further away from the kinetic energy limit. In the case of a torque-controlled manipulator such as here, its value is practically upper bounded by the capability of the robot to achieve rapid changes in desired joint torques as the robot reaches the kinetic energy limit, as observed for the case with $\gamma=50$ in Experiment 2.

%%%%%%%%%%%%%%%%%%%%%%%%%%%%%%%%%%%%%%%%%%%%%%%%%%%
%\subsection{Unmodelled external interactions}
Interestingly, as shown by \cref{th:povershoot} and Experiment 4, decreasing $\gamma$ also reduces robustness against unmodelled (external) disturbances, in the sense of increased kinetic energy errors beyond the chosen limit. Experiment 3 demonstrated that incorporating an estimate of external interaction torques $\teh$ into the computation of the CBF (which we called \textit{interaction-aware}) can reduce or remove such errors, however we reiterate that it can be difficult to obtain accurate estimates of such external disturbances.

%%%%%%%%%%%%%%%%%%%%%%%%%%%%%%%%%%%%%%%%%%%%%%%%%%%
%\subsection{Kinetic energy limit}
The choice of the kinetic energy limit $\Km$ itself is almost entirely task- and situation-dependent, and outside the scope of this work. Considering safety, we envision that its value would be determined by external systems, e.g. planning and/or vision systems that assess the level of danger in a given situation, such as human proximity.

%%%%%%%%%%%%%%%%%%%%%%%%%%%%%%%%%%%%%%%%%%%%%%%%%%%
% \todo{
% However, note that this may not always lead to a safer situation: the safety filter is designed to provide minimal damping, also when the external power is negative. We observe four cases:
% \begin{enumerate}
%     \item Interaction-agnostic, $\Pe>0$:
%     $K\le\Km+\frac{\Pe}{\gamma}$.
%     \item Interaction-agnostic, $\Pe\le0$:
%     $K\le\Km$.
%     \item Interaction-aware, $\Pe>0$:
%     $\Pe$ fully dissipated, $K\le\Km$.
%     \item Interaction-aware, $\Pe\le0$: $\Pe$ fully utilized for damping,
%     $K\le\Km$.
% \end{enumerate}
% So if an operator provides power, the interaction-aware controller will dissipate it. However, it will also leverage negative power provided by the external interaction (e.g. a human). This effectively means that the operator is exposed to the full power of the base controller if it stands in the way.
% }

% \todo{
% This issue can be circumvented by only including the external power if it is positive, or adding an additional CBF that limits the total interaction power. But this is outside the scope of this work, and belongs more in the conclusions
% }
\section{Conclusions and future work}
\label{sec:conclusions}
We have presented, experimentally validated, and discussed a Control Barrier Function based approach to limit the kinetic energy of torque-controlled robots. Its energetic and disturbance rejection properties were thoroughly analysed. Taking the form of a \textit{safety filter}, it requires zero knowledge of the nominal controller, which enables its use with black box (e.g., learning) controllers, providing them with strong guarantees on closed-loop energetic behaviour. Furthermore, the approach is minimally invasive, that is, the behaviour of the nominal controller is unaltered until intervention is necessary to keep the system within the safe set. These properties make such a safety filter attractive and straightforward to implement.

We are working towards extending the proposed schemes to limit not only the manipulator's total kinetic energy but also the kinetic energy transferable in specific task-space directions. In this way it will be possible to address protocols for safety hazards by restricting energy transfer in directions where human operators are present, reducing conservatism.

%%%%%%%%%%%%%%%%%%%%%%%%%%%%%%%%%%%%%%%%%%%%%%%%%%%%%%%%%
% References

%\printbibliography

%Bibliography
\bibliographystyle{unsrt}  
\bibliography{references} 

\begin{thebibliography}{10}

\bibitem{matheson2019human}
Eloise Matheson, Riccardo Minto, Emanuele~GG Zampieri, Maurizio Faccio, and Giulio Rosati.
\newblock Human--robot collaboration in manufacturing applications: A review.
\newblock {\em Robotics}, 8(4):100, 2019.

\bibitem{robotics12030084}
Carlo Weidemann, Nils Mandischer, Frederick van Kerkom, Burkhard Corves, Mathias Hüsing, Thomas Kraus, and Cyryl Garus.
\newblock Literature review on recent trends and perspectives of collaborative robotics in work 4.0.
\newblock {\em Robotics}, 12(3), 2023.

\bibitem{Robla-Gomez2017WorkingEnvironments}
S.~Robla-Gomez, Victor~M. Becerra, J.~R. Llata, E.~Gonzalez-Sarabia, C.~Torre-Ferrero, and J.~Perez-Oria.
\newblock {Working Together: A Review on Safe Human-Robot Collaboration in Industrial Environments}.
\newblock {\em IEEE Access}, 5:26754--26773, 2017.

\bibitem{Hamad2023AInteraction}
Mazin Hamad, Simone Nertinger, Robin~J. Kirschner, Luis Figueredo, Abdeldjallil Naceri, and Sami Haddadin.
\newblock {A Concise Overview of Safety Aspects in Human-Robot Interaction}.
\newblock (101017274):1--15, 2023.

\bibitem{Brunke2022SafeLearning}
Lukas Brunke, Melissa Greeff, Adam~W. Hall, Zhaocong Yuan, Siqi Zhou, Jacopo Panerati, and Angela~P. Schoellig.
\newblock {Safe Learning in Robotics: From Learning-Based Control to Safe Reinforcement Learning}.
\newblock {\em Annual Review of Control, Robotics, and Autonomous Systems}, 5(1):411--444, 5 2022.

\bibitem{InternationalOrganizationforStandardization2016Robots15066:2016}
{International Organization for Standardization}.
\newblock {Robots and robotic devices - Collaborative robots (ISO Standard No. 15066:2016)}, 2 2016.

\bibitem{Malm2019DynamicRobots}
Timo Malm, Timo Salmi, Ilari Marstio, and Jari Montonen.
\newblock {Dynamic safety system for collaboration of operators and industrial robots}.
\newblock {\em Open Engineering}, 9(1):61--71, 3 2019.

\bibitem{mansfeld_safety_2018}
Nico Mansfeld, Mazin Hamad, Marvin Becker, Antonio~Gonzales Marin, and Sami Haddadin.
\newblock Safety map: A unified representation for biomechanics impact data and robot instantaneous dynamic properties.
\newblock 3(3):1880--1887.

\bibitem{Kishi2003}
Y.~Kishi, {Zhi Wei Luo}, F.~Asano, and S.~Hosoe.
\newblock {Passive impedance control with time-varying impedance center}.
\newblock In {\em Proceedings 2003 IEEE International Symposium on Computational Intelligence in Robotics and Automation. Computational Intelligence in Robotics and Automation for the New Millennium (Cat. No.03EX694)}, pages 1207--1212. IEEE, 2003.

\bibitem{Tsetserokou2007}
Dzmitry Tsetserukou, Riichiro Tadakuma, Hiroyuki Kajimoto, Naoki Kawakami, and Susumu Tachi.
\newblock {Towards Safe Human-Robot Interaction: Joint Impedance Control of a New Teleoperated Robot Arm}.
\newblock In {\em RO-MAN 2007 - The 16th IEEE International Symposium on Robot and Human Interactive Communication}, pages 860--865. IEEE, 2007.

\bibitem{roozing_energy-based_2020}
Wesley Roozing, Stefan~S. Groothuis, and Stefano Stramigioli.
\newblock Energy-based safety in series elastic actuation.
\newblock In {\em 2020 {IEEE} International Conference on Robotics and Automation ({ICRA})}, pages 914--920. {IEEE}.

\bibitem{Haddadin2016}
Sami Haddadin and Elizabeth Croft.
\newblock {Physical Human–Robot Interaction}.
\newblock In {\em Springer Handbook of Robotics}, pages 1839--1840. 2016.

\bibitem{Tadele2014ThePublications}
Tadele~Shiferaw Tadele, Theo de~Vries, and Stefano Stramigioli.
\newblock {The Safety of Domestic Robotics: A Survey of Various Safety-Related Publications}.
\newblock {\em IEEE Robotics {\&} Automation Magazine}, 21(3):134--142, 9 2014.

\bibitem{Ames2019}
Aaron~D. Ames, Samuel Coogan, Magnus Egerstedt, Gennaro Notomista, Koushil Sreenath, and Paulo Tabuada.
\newblock {Control barrier functions: Theory and applications}.
\newblock {\em 2019 18th European Control Conference, ECC 2019}, pages 3420--3431, 2019.

\bibitem{Cortez2020}
Wenceslao~Shaw Cortez and Dimos~V. Dimarogonas.
\newblock {Correct-by-Design Control Barrier Functions for Euler-Lagrange Systems with Input Constraints}.
\newblock In {\em 2020 American Control Conference (ACC)}, pages 950--955. IEEE, 7 2020.

\bibitem{Landi2019}
Chiara~Talignani Landi, Federica Ferraguti, Silvia Costi, Marcello Bonfe, and Cristian Secchi.
\newblock {Safety barrier functions for human-robot interaction with industrial manipulators}.
\newblock In {\em 2019 18th European Control Conference, ECC 2019}, 2019.

\bibitem{Rauscher2016ConstrainedFunctions}
Manuel Rauscher, Melanie Kimmel, and Sandra Hirche.
\newblock {Constrained robot control using control barrier functions}.
\newblock In {\em 2016 IEEE/RSJ International Conference on Intelligent Robots and Systems (IROS)}, pages 279--285. IEEE, 10 2016.

\bibitem{Ferraguti2022}
Federica Ferraguti, Chiara~Talignani Landi, Andrew Singletary, Hsien-Chung Lin, Aaron Ames, Cristian Secchi, and Marcello Bonf{\`{e}}.
\newblock {Safety and Efficiency in Robotics: The Control Barrier Functions Approach}.
\newblock pages 15--30. 2022.

\bibitem{Singletary2021}
Andrew Singletary, Shishir Kolathaya, and Aaron~D. Ames.
\newblock {Safety-Critical Kinematic Control of Robotic Systems}.
\newblock {\em Proceedings of the American Control Conference}, 2021-May:14--19, 2021.

\bibitem{Capelli2022}
Beatrice Capelli, Cristian Secchi, and Lorenzo Sabattini.
\newblock {Passivity and Control Barrier Functions: Optimizing the Use of Energy}.
\newblock {\em IEEE Robotics and Automation Letters}, 7(2):1356--1363, 4 2022.

\bibitem{ortega}
R.~Ortega, A.J. Van Der~Schaft, I.~Mareels, and B.~Maschke.
\newblock Putting energy back in control.
\newblock {\em IEEE Control Systems Magazine}, 21(2):18--33, 2001.

\bibitem{ear}
Stefano Stramigioli.
\newblock Energy-aware robotics.
\newblock In M.~Kanat Camlibel, A.~Agung Julius, Ramkrishna Pasumarthy, and Jacquelien~M.A. Scherpen, editors, {\em Mathematical Control Theory I}, pages 37--50, Cham, 2015. Springer International Publishing.

\bibitem{Michel2024}
Youssef Michel, Matteo Saveriano, and Dongheui Lee.
\newblock {A Novel Safety-Aware Energy Tank Formulation Based on Control Barrier Functions}.
\newblock {\em IEEE Robotics and Automation Letters}, 9(6):5206--5213, 6 2024.

\bibitem{Califano2022OnSystems}
Federico Califano, Ramy Rashad, Cristian Secchi, and Stefano Stramigioli.
\newblock {\em {On the Use of Energy Tanks for Robotic Systems}}.
\newblock Springer International Publishing, Cham, 2023.

\bibitem{BENZI20231288}
F.~Benzi, F.~Ferraguti, and C.~Secchi.
\newblock Energy tank-based control framework for satisfying the iso/ts 15066 constraint.
\newblock {\em IFAC-PapersOnLine}, 56(2):1288--1293, 2023.
\newblock 22nd IFAC World Congress.

\bibitem{Raiola2018DevelopmentRobots}
Gennaro Raiola, Carlos~Alberto Cardenas, Tadele~Shiferaw Tadele, Theo de~Vries, and Stefano Stramigioli.
\newblock {Development of a Safety- and Energy-Aware Impedance Controller for Collaborative Robots}.
\newblock {\em IEEE Robotics and Automation Letters}, 3(2):1237--1244, 4 2018.

\bibitem{Ames2017}
Aaron~D. Ames, Xiangru Xu, Jessy~W. Grizzle, and Paulo Tabuada.
\newblock {Control Barrier Function Based Quadratic Programs for Safety Critical Systems}.
\newblock {\em IEEE Transactions on Automatic Control}, 62(8):3861--3876, 2017.

\bibitem{XU201554}
Xiangru Xu, Paulo Tabuada, Jessy~W. Grizzle, and Aaron~D. Ames.
\newblock Robustness of control barrier functions for safety critical control.
\newblock {\em IFAC-PapersOnLine}, 48(27):54--61, 2015.
\newblock Analysis and Design of Hybrid Systems ADHS.

\bibitem{Califano2023}
Federico Califano.
\newblock {Passivity-Preserving Safety-Critical Control Using Control Barrier Functions}.
\newblock {\em IEEE Control Systems Letters}, 7:1742--1747, 2023.

\bibitem{Stellato2020}
Bartolomeo Stellato, Goran Banjac, Paul Goulart, Alberto Bemporad, and Stephen Boyd.
\newblock {OSQP: an operator splitting solver for quadratic programs}.
\newblock {\em Mathematical Programming Computation}, 12(4):637--672, 12 2020.

\end{thebibliography}
\end{document}